\def\eqref#1{equation~\ref{#1}}
\def\1{\bm{1}}
\DeclareMathAlphabet{\mathsfit}{\encodingdefault}{\sfdefault}{m}{sl}
\SetMathAlphabet{\mathsfit}{bold}{\encodingdefault}{\sfdefault}{bx}{n}
\DeclareMathOperator*{\argmin}{argmin}
\DeclareMathOperator{\diag}{diag}
\DeclareMathOperator{\sign}{sign}
\newcommand{\pub}{\mathrm{tar}}
\newcommand{\pri}{\mathrm{hid}}
\newtheorem{definition}{Definition}
\newtheorem{lemma}{Lemma}
\newtheorem{theorem}{Theorem}
\title{Unsupervised Information Obfuscation for Split Inference of \\Neural Networks\vspace{.2cm}}
\author{
  Mohammad Samragh\thanks{Work was done while interning at Qualcomm AI Research.} \\
  ECE Department,~UC San Diego\\
  \texttt{msamragh@ucsd.edu}
   \and
 Hossein Hosseini\thanks{Qualcomm AI Research is an initiative of Qualcomm Technologies, Inc.} \\
  Qualcomm AI Research\\
  \texttt{hhossein@qti.qualcomm.com}\vspace{0.15cm}
  \and
 Aleksei Triastcyn\footnotemark[2] \\
  Qualcomm AI Research\\
  \texttt{atriastc@qti.qualcomm.com}
  \and
 Kambiz Azarian\footnotemark[2] \\
  Qualcomm AI Research\\
  \texttt{kambiza@qti.qualcomm.com}\vspace{0.15cm}
  \and
 Joseph Soriaga\footnotemark[2] \\
  Qualcomm AI Research\\
  \texttt{jsoriaga@qti.qualcomm.com}
  \and
 Farinaz Koushanfar \\
  ECE Department,~UC San Diego\\
  \texttt{farinaz@ucsd.edu}
}
\date{}
\begin{document}
\maketitle

\begin{abstract}
Splitting network computations between the edge device and a server enables low edge-compute inference of neural networks but might expose sensitive information about the test query to the server. To address this problem, existing techniques train the model to minimize information leakage for a given set of sensitive attributes. In practice, however, the test queries might contain attributes that are not foreseen during training. 

We propose instead an unsupervised obfuscation method to discard the information irrelevant to the main task. We formulate the problem via an information theoretical framework and derive an analytical solution for a given distortion to the model output. In our method, the edge device runs the model up to a split layer determined based on its computational capacity. 
It then obfuscates the obtained feature vector based on the first layer of the server's model by removing the components in the null space as well as the low-energy components of the remaining signal. 
Our experimental results show that our method outperforms existing techniques in removing the information of the irrelevant attributes and maintaining the accuracy on the target label. We also show that our method reduces the communication cost and incurs only a small computational overhead.

\end{abstract}


\section{Introduction}

In recent years, the surge in cloud computing and machine learning has led to the emergence of Machine Learning as a Service (MLaaS), where the compute capacity of the cloud is used to analyze the data generated on edge devices. One shortcoming of the MLaaS framework is the leakage of the clients' privacy-sensitive data to the cloud server. To address this problem, several cryptography-based solutions have been proposed which provide provable security at the cost of increasing the communication cost and delay of the cloud inference by orders of magnitude~\citep{juvekar2018gazelle,rathee2020cryptflow2}. Such cryptography-based solutions are applicable in use-cases where delay is tolerable such as healthcare~\citep{microsoftAIHealth}, but not in scenarios where millions of clients request fast and low communication cost responses such as in Amazon Alexa or Apple Siri applications.
A light-weight alternative to cryptographic solutions is to hide sensitive information on the edge device, e.g., by blurring images before sending them to the server~\citep{vishwamitra2017blur}. This approach, however, is task-specific and is not viable for generic applications.

Another approach is split inference which provides a generic and computationally efficient data obfuscation framework~\citep{kang2017neurosurgeon,chi2018privacy}. 
In this approach, the service provider trains the model and splits it into two sub-models, $M_c$ and $M_s$, where $M_c$ contains the first few layers of the model and $M_s$ contains the rest. 
At inference time, the client runs $M_c$ on the edge device and sends the resulting feature vector $z=M_c(x)$ to the server, which then computes the target label as $\hat{y}^{\pub}=M_s(z)$. To protect the sensitive content of the client's query, the model is required to be designed such that $z$ only contains the information related to the underlying task. This aligns well with the recent privacy laws, such as GDPR~\citep{gdpr}, that restrict the amount of collected information to the necessary minimum. For instance, when sending facial features for cell-phone authentication, the client does not want to disclose other information such as their mood or their makeup. We denote such hidden attributes as $y^\pri$ in the remainder of this paper. 

\begin{figure}[t]
\centering
\includegraphics[width=0.5\columnwidth]{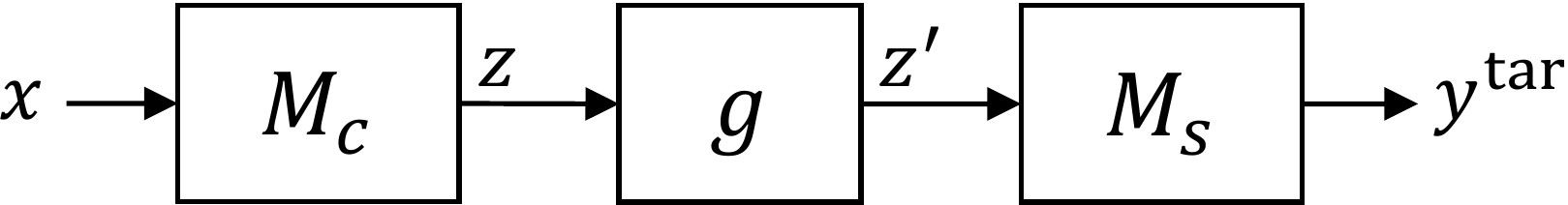}
\caption{\small Unsupervised data obfuscation in split inference setting. 
$M_c$ and $M_s$ are the client and server models, respectively, and $g$ is the obfuscation function. The client computes $z'=g(M_c(x))$ and sends $z'$ to the server to predict the target attribute as $\hat{y}^{\pub}=M_s(z')$. The obfuscator is designed to minimize the content in $z$ that is irrelevant to $y^\pub$, and also to be efficient enough to run on the edge device.
}
\label{fig:scenario}
\end{figure}




Current methods for data obfuscation in split inference aim to remove the information corresponding to a known list of hidden attributes. For example, adversarial training \citep{feutry2018learning} and noise injection \citep{mireshghallah2020shredder} methods minimize the accuracy of an adversary model $M_a(z)$ on $y^\pri$, and the information bottleneck method \citep{osia2018deep} trains the model to minimize the mutual information between the query $z$ and $y^{\pri}$. 
The set of hidden attributes, however, can vary from one query to another. Hence, it is not feasible to foresee all types of attributes that could be considered sensitive for a specific MLaaS application. Moreover, the need to annotate inputs with all possible hidden attributes significantly increases the training cost as well.



In this paper, we propose an alternative solution in which, instead of removing the information that is related to a set of sensitive attributes, we discard the information that is not used by the server model to predict the target label. Our contributions are summarized in the following: 

\begin{itemize}[itemsep=1pt,leftmargin=0.5cm]
\item We propose an {\it unsupervised obfuscation} mechanism, depicted in Figure~\ref{fig:scenario}, and formulate a general optimization problem for finding the obfuscated feature vector $z'=g(z)$. The formulation is based on minimization of mutual information between $z$ and $z'$, under a distortion constraint on model output $\|M_s(z)-M_s(z')\|$.
We then devise a practical solution for a relaxation of the problem using the SVD of the first layer of $M_s$.

\item 
We perform extensive experiments on several datasets and show that our methods provide better tradeoffs between accuracy and obfuscation compared to existing approaches such as adversarial training, despite having no knowledge of the hidden attribute at training or inference phases. 
We also investigate the role of the edge computation and show that, with higher edge computation, the client obtains better obfuscation at the same target accuracy.


\end{itemize}

\section{Problem Statement}
Let $M_c$ and $M_s$ be the client and server models, respectively, and $g$ be the obfuscation function. 
We consider a split inference setup, where the model $M_s \circ M_c$ is trained with a set of examples $\{x_i\}_{i=1}^N$ and their corresponding target labels $\{y^{\pub}_i\}_{i=1}^N$. 
At inference phase, clients run $g \circ M_c$ on their data and send $z'=g(z)$ to the server, where $z=M_c(x)$. 
The goal of the data obfuscation is to generate $z'$ such that it contains minimal information about the sensitive attributes, yet the predicted target label for $z'$ is similar to that of $z$, i.e., $M_s(z')\approx M_s(z)$. 
We consider the unsupervised data obfuscation setting, where the sensitive attributes are not available at training or inference phases, i.e., the obfuscation algorithm is required to be generic and remove information about any attribute that is irrelevant to the target label.

\subsection{Threat Model}

\noindent \textbf{Client model.} 
Upon receiving the service, the client decides on the best tradeoff of accuracy, data obfuscation, and computational efficiency, based on their desired level of information protection and also the computational capability of the edge device. Similar mechanisms are already in use in AI-on-the-edge applications. For example, in the application of unlocking smart phones with face recognition, the client can specify the required precision in face recognition, where a lower precision will provide faster authentication at the cost of lower security~\citep{Samsung}.

\noindent \textbf{Server model.} The server is assumed to be honest-but-curious. It cooperates in providing an inference mechanism with minimum information leakage to abide by law enforcement~\citep{gdpr} or to have competitive advantage in an open market. The server performs inference of the target attribute $y^\pub$, but might try to extract the sensitive information from the obfuscated  feature vector, $z'$, as well.

\noindent \textbf{Adversary model.} 
The adversary tries to infer sensitive attribute(s), $y^\pri$, from the obfuscated feature vector, $z'$. We consider a strong adversary with full knowledge of the client and server models, the training data and training algorithm, and the client's obfuscation algorithm and setting. 
The client and the server, however, do not know the adversary's algorithm and are not aware of the sensitive attributes that the adversary tries to infer.


\section{Related Work}\label{sec:related}
Prior work has shown that representations learned by neural networks can be used to extract sensitive information~\citep{song2019overlearning} or even reconstruct the raw data~\citep{mahendran2015understanding}. Current methods for data obfuscation can be categorized as follows.

\noindent{\bf Cryptography-based Solutions.} 
A class of public-key encryption algorithms protects the data in the transmission phase~\citep{al2003certificateless}, but cannot prevent data exposure to a curious server. Nevertheless, these methods can be used in conjunction with our approach to strengthen the defense against external adversaries. 
Another type of cryptographic methods allows running inference directly on the encrypted data~\citep{rathee2020cryptflow2} at the cost of significant communication and computation overhead. As an example, using the state-of-the-art cryptographic method, performing inference on a single ImageNet data takes about half an hour and requires $\sim32$GB data transmission~\citep{rathee2020cryptflow2}. We consider scenarios where the server provides service to millions of users (e.g., in Amazon Alexa or Apple Siri applications), and users expect low communication and fast response. Hence, classic solutions for secure function evaluation are not applicable to our scenario due to their high computational and communication cost. 

\noindent{\bf Noise Injection}.
In this method, the client sends a noisy feature $z'\sim h_\theta(z)$ instead of $z$ to the server, where $z'$ is drawn from a randomized mechanism $h_\theta(\cdot)$ parameterized by $\theta$ (e.g., a Gaussian distribution). 
A typical approach one could employ is differential privacy (DP)~\citep{dwork2006differential,dwork2006calibrating,dwork2014algorithmic}, which guarantees that the distribution of $z'$ does not differ too much for any two inputs $z_1$ and $z_2$. Using DP, however, can lead to a large loss of accuracy
~\citep{kasiviswanathan2011can,ullman2018tight,kairouz2019advances}. 
To maintain the utility of the model,~\citep{mireshghallah2020shredder} proposed to solve the following:
\begin{equation}\label{eq:related_mutual}
\small
    \underset{\theta}{\min}\ \  \mathcal{L}(M_s(z'), y^\pub) -  \gamma \mathcal{L}(M_a(z'), y^\pri)-\alpha |z'-z|,
\end{equation}
where the first and second terms denote the cross entropy loss for the server and adversary, respectively. 
In general, while noise addition improves the privacy, it has been shown to significantly reduce the accuracy~\citep{liu2019better,li2019deepobfuscator}.


\noindent{\bf Information Bottleneck (IB)} is proposed to obfuscate the information related to a known set of sensitive attributes, $y^{\pri}$. Let $I(\cdot,\cdot)$ denote mutual information. The idea is to train $M_c$ that maximizes $I(z,y^{\pub})$ while minimizing $I(z,y^{\pri})$~\citep{osia2018deep,moyer2018invariant}. The optimization is formulated as follows:
\begin{equation}\label{eq:related_adversarial}
    \underset{M_c}{\max}\quad I(M_c(x), y^\pub) - \gamma I(M_c(x), y^\pri)-\beta I(M_c(x), x).
\end{equation}

\noindent{\bf Adversarial Training (AT)} is an effective method for obfuscating the information of a known set of sensitive attributes, while maintaining the accuracy on the target label. AT solves the following min-max optimization problem:
\begin{align}\label{eq:adversarial}
\underset{M_c, M_s}{\max}\ \underset{M_a}{\min}\  \underset{x,y^{\pub},y^{\pri}}{\mathbb{E}}[\gamma\mathcal{L}(y^{\pri}, M_a\circ M_c(x)) - \mathcal{L}(y^{\pub}, M_s\circ M_c(x))],
\end{align}
where $\mathcal{L}$ denotes the cross-entropy loss. 
The above objective can be achieved through an adversarial training method~\citep{edwards2015censoring,hamm2017minimax,xie2017controllable,li-etal-2018-towards,feutry2018learning,li2019deepobfuscator, huang2017context}. Upon convergence, the model $M_c$ generates $z$, using which $M_a$ cannot accurately estimate $y^{\pri}$, yet $M_s$ accurately predicts $y^{\pub}$. 

Existing obfuscation methods for split inference have several limitations. Except differential privacy which often significantly reduces the accuracy on $y^\pub$, the underlying assumption in the above methods is that a set of hidden attributes $y^\pri$ is provided at training time. In practice, however, it might not be feasible to foresee and identify all possible sensitive attributes and annotate the training data accordingly. 
It also contradicts deployment at-scale since whenever a new attribute needs to be protected, the client model has to be retrained and re-distributed to all edge devices that use the service. 
Current approaches also often provide a poor tradeoff between accuracy and preventing the information leakage. Moreover, the tradeoff of accuracy and obfuscation with the client-side computation is not well studied in the split learning framework.
In this paper, we characterize this tradeoff and propose to remove the content irrelevant to the main task, instead of obfuscating a predefined set of sensitive attributes. 
We empirically show that our method reduces the attack accuracy on hidden attributes, which are not known to the client or the server at training or inference times, at a small or no cost to the accuracy on the target label.



\section{The Proposed Method}\label{sec:problem}

\subsection{Problem Formulation} 
Let $z\in\mathbb{R}^{n\times 1}$ be the feature vector and $y^\pub$ be the corresponding label. Our goal is to design the obfuscation function, $g$, such that $z'=g(z)$ contains the necessary and sufficient information about $y^\pub$. Specifically, we want to (i) minimize the information that $z'$ carries about $z$, while (ii) maintaining the utility of the model for predicting $y^\pub$ as much as possible. We formulate the problem as follows:
\begin{equation}\label{eq:main_obj}
    \underset{z'}{\min\ }I(z', z)\quad \mbox{ s.t. } \|M_s(z')-M_s(z)\|\leq \epsilon.
\end{equation}
The objective function in~(\ref{eq:main_obj}) minimizes the mutual information of $z'$ and $z$, while bounding the distortion to the model output.
Note that our formulation is different from the information bottleneck~(\ref{eq:related_mutual}) proposed by~\citep{osia2018deep,moyer2018invariant} in that it does not use $y^\pri$ and, hence, is unsupervised with respect to hidden attributes.


Let $H(\cdot)$ denote the entropy function. We have:
\begin{align}
\nonumber I(z', z) = H(z')-H(z'|z) = H(z'),
\end{align}
where the second equality holds since the obfuscation function $g$ is a deterministic algorithm and thus $H(z'|z)=0$. 
The objective in~(\ref{eq:main_obj}) can therefore be written as:
\begin{equation}\label{eq:entropy_obj}
    \underset{z'}{\min\ }H(z')\quad \mbox{ s.t. } \|M_s(z')-M_s(z)\|\leq \epsilon.
\end{equation}
Intuitively, a small $H(z')$ indicates that, from the server's point of view, the incoming queries look similar. For example, in face recognition applications, only basic properties of face images are transmitted to the server and other irrelevant attributes that would make the images different in each query, such as the background or makeup, are obfuscated. 

\subsection{Obfuscation for Linear Layers}\label{sec:lin_analysis}
Our goal is to develop a low-complexity obfuscation function, $g$, that solves~(\ref{eq:entropy_obj}) with respect to the server's model, $M_s$. 
Figure~\ref{fig:scenario} shows the block diagram of the method. In its general form, the function $g$ can be viewed as an auto-encoder (AE) network that is trained  with the objective of~(\ref{eq:entropy_obj}).
Such a network would be, however, computationally complex to run on edge devices and defeats the purpose of sending activations to the server for low-complexity inference.

To address the computational complexity problem, we design the obfuscation function with respect to the first linear layer (a convolutional or fully-connected layer) of $M_s$. 
For linear models, the objective in~(\ref{eq:entropy_obj}) can be written as: 
\begin{equation}\label{eq:entropy_linear}
\underset{z'}{\min\ }H(z')\quad \mbox{ s.t. } \|Wz-Wz'\|\leq \epsilon,
\end{equation}
where $W\in\mathbb{R}^{m\times n}$ is the weight matrix. 
In the following, we present our analysis for the linear models.

\begin{definition}\label{def:W}
Let $W_{m\times n}=U_{m\times m}S_{m\times n}V_{n\times n}^T$ denote the singular value decomposition (SVD) of $W$. The columns of $V^T$ provide an orthonormal basis $\{v_k\}_{k=1}^{n}$. We write: 
\begin{equation}
z = \sum_{k=1}^{n}\alpha_k v_k=\sum_{k=1}^{n}z_k, \quad \alpha_k=v_k^T\cdot z.
\end{equation}
\end{definition}


We have the following Lemmas.






\begin{lemma}\label{lem:za}
If $z\sim N(\mathbf{0}, \sigma^2\mathbf{I})$, then $\alpha_k$'s are independent random variables with $\alpha_k\sim N(0, \sigma^2)$.
\end{lemma}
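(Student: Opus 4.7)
The plan is to exploit the fact that the vector $\boldsymbol{\alpha} = (\alpha_1,\dots,\alpha_n)^T$ of expansion coefficients is itself a linear image of the Gaussian vector $z$, and then invoke the standard fact that a joint Gaussian with diagonal covariance has independent components.

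First I would collect the coefficients into a single linear map. From Definition~\ref{def:W}, the basis vectors $\{v_k\}$ are the columns of $V^T$, equivalently the rows of $V$. So stacking $\alpha_k = v_k^T z$ across $k$ gives $\boldsymbol{\alpha} = V z$. Because affine functions of a Gaussian are Gaussian, $\boldsymbol{\alpha}$ is jointly Gaussian, and its distribution is determined by its mean and covariance.

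Next I would compute these two quantities. The mean is immediate: $\mathbb{E}[\boldsymbol{\alpha}] = V \mathbb{E}[z] = \mathbf{0}$. For the covariance, using orthogonality of $V$ (which comes from $V$ appearing in the SVD of $W$),
\begin{equation*}
\mathrm{Cov}(\boldsymbol{\alpha}) \;=\; V\,(\sigma^2 \mathbf{I})\,V^T \;=\; \sigma^2\, V V^T \;=\; \sigma^2 \mathbf{I}.
\end{equation*}
Hence $\boldsymbol{\alpha} \sim N(\mathbf{0},\sigma^2 \mathbf{I})$. Reading off the diagonal gives $\alpha_k \sim N(0,\sigma^2)$, and the off-diagonal zeros together with joint Gaussianity yield independence of the $\alpha_k$.

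There is essentially no hard step here: the only thing to be careful about is that pairwise uncorrelatedness of the $\alpha_k$ implies mutual independence only because they are \emph{jointly} Gaussian, and the joint Gaussianity is precisely what the linear-image-of-a-Gaussian observation delivers. So the proof is really just: write $\boldsymbol{\alpha}=Vz$, use orthogonality of $V$ to collapse the covariance to $\sigma^2\mathbf{I}$, and cite the Gaussian uncorrelated-implies-independent principle.
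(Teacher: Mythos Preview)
Your proof is correct and follows essentially the same approach as the paper: both establish Gaussianity of the $\alpha_k$'s via linearity, use orthogonality of $V$ to obtain zero cross-covariances, and then invoke the uncorrelated-implies-independent property of jointly Gaussian variables. Your version is slightly more streamlined in that you treat the whole vector $\boldsymbol{\alpha}=Vz$ at once (yielding full joint Gaussianity and mutual independence directly), whereas the paper carries out the same computation coordinatewise and argues pairwise, but the underlying argument is identical.
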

\begin{proof}
Since $\alpha_k=v_k^T \cdot z$, it is Gaussian with the following mean and variance:
\begin{align}
\nonumber
\mathbb{E}[\alpha_k]=v_k^T \cdot \mathbb{E}[z]=0,\qquad
\mathbb{E}[\alpha^2_k]=v_k^T \cdot \mathbb{E}[z z^T] \cdot v_k =\sigma^2 v_k^T \cdot v_k = \sigma^2.
\end{align}

Assume $i\neq j$. We have:
\begin{align}
\nonumber cov(\alpha_i,\alpha_j)=\mathbb{E}[\alpha_i\cdot \alpha_j] - \mathbb{E}[\alpha_i]\cdot \mathbb{E}[\alpha_j] = v_i^T \cdot \mathbb{E}[z z^T] \cdot v_j = 0.
\end{align}
Also, $\beta=c_1\alpha_i + c_2\alpha_j = (c_1 v_i^T + c_2 v_j^T)\cdot z = v'^T z$. Therefore, since $cov(\alpha_i,\alpha_j)=0$ and $\beta$ is Gaussian for any $c_1$ and $c_2$, $\alpha_k$'s are independent random variables. 
\end{proof}

\begin{lemma}\label{lem:indep}
For $z\sim N(\mathbf{0}, \sigma^2\mathbf{I})$, we have $H(z_i|z_j)=H(z_i), \forall i\neq j$.
\end{lemma}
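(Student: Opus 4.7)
The plan is to reduce the claim about the vector-valued components $z_i, z_j$ to the corresponding claim about the scalar coefficients $\alpha_i, \alpha_j$, for which Lemma~\ref{lem:za} already delivers independence. Concretely, by Definition~\ref{def:W} we have $z_i = \alpha_i v_i$ and $z_j = \alpha_j v_j$, where the orthonormal vectors $v_i, v_j$ are deterministic (they come from the SVD of the fixed weight matrix $W$). Thus $z_i$ and $z_j$ are deterministic (linear) functions of $\alpha_i$ and $\alpha_j$ respectively, and conversely $\alpha_i = v_i^T z_i$, $\alpha_j = v_j^T z_j$, so the $\sigma$-algebras generated by $z_i$ and $\alpha_i$ coincide (and likewise for $j$).

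First I would invoke Lemma~\ref{lem:za} to conclude that $\alpha_i$ and $\alpha_j$ are independent whenever $i \neq j$. Then I would observe that the measurable bijection between $z_i$ and $\alpha_i$ (along the fixed one-dimensional subspace $\mathrm{span}(v_i)$) transports this independence to the pair $(z_i, z_j)$. Finally, I would apply the standard identity $H(X \mid Y) = H(X)$ whenever $X$ and $Y$ are independent to obtain $H(z_i \mid z_j) = H(z_i)$.

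The only subtlety I expect is bookkeeping around the fact that $z_i$ is a degenerate $n$-dimensional random vector supported on the line $\mathrm{span}(v_i)$, so strictly speaking its differential entropy in $\mathbb{R}^n$ is not finite; the cleanest way to handle this is to interpret $H(z_i)$ as the entropy of $z_i$ with respect to the natural one-dimensional Hausdorff measure on $\mathrm{span}(v_i)$, which equals $H(\alpha_i)$ since the map $\alpha_i \mapsto \alpha_i v_i$ is an isometry. Under this interpretation (which is the only one under which the statement is non-vacuous), the argument above goes through verbatim, and no new calculation beyond Lemma~\ref{lem:za} is needed.
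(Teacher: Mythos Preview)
Your proposal is correct and reaches the same conclusion from the same starting point (Lemma~\ref{lem:za}), but the lifting step differs from the paper's. The paper argues that $z_i$ and $z_j$ are jointly Gaussian (by checking that every linear combination $c_1 z_i + c_2 z_j = c_1\alpha_i v_i + c_2\alpha_j v_j$ is Gaussian) and then computes $\mathrm{cov}(z_i,z_j)=\mathbf{0}$, invoking the Gaussian-specific fact that uncorrelated jointly Gaussian vectors are independent. You instead observe that $z_i$ and $\alpha_i$ generate the same $\sigma$-algebra (the map $\alpha_i\mapsto\alpha_i v_i$ is a bijection onto $\mathrm{span}(v_i)$), so independence of $(\alpha_i,\alpha_j)$ transfers directly to $(z_i,z_j)$ without any distributional assumption on the lift. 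Your route is shorter and more elementary, and it also makes explicit the degeneracy issue---$z_i$ lives on a one-dimensional subspace, so $H(z_i)$ must be read along $\mathrm{span}(v_i)$---which the paper leaves implicit here but effectively adopts in Lemma~\ref{lem:entropy} when it shows $H(z_k)=H(\alpha_k)$.
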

\begin{proof}
Assume $i\neq j$. We have: 
\begin{align}
\nonumber cov(z_i,z_j)&=\mathbb{E}[z_i\cdot z_j^T] - \mathbb{E}[z_i]\cdot \mathbb{E}[z_j]^T = \mathbb{E}[\alpha_i \alpha_j]v_i\cdot v_j^T - \mathbb{E}[\alpha_i]\mathbb{E}[\alpha_j]v_i\cdot v_j^T = \mathbf{0}.
\end{align}
Also, $x=c_1z_i + c_2z_j=c_1 \alpha_i v_i + c_2 \alpha_j v_j$. According to Lemma~\ref{lem:za}, $\alpha_k$'s are independent Gaussian random variables. Thus, $x$ is multivariate Gaussian for any $c_1$ and $c_2$. Hence, $z_k$'s are independent random vectors and $H(z_i|z_j)=H(z_i), \forall i\neq j$. 
\end{proof}

\begin{lemma}\label{lem:entropy}
For $z\sim N(\mathbf{0}, \sigma^2\mathbf{I})$, we have $H(z)=\sum_{k=1}^{n}H(\alpha_k)$.
\end{lemma}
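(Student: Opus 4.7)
The plan is to reduce $H(z)$ to $H(\alpha_1,\ldots,\alpha_n)$ via a change of variables, and then use independence of the $\alpha_k$'s (Lemma 1) to split the joint entropy into a sum. Concretely, let $V\in\mathbb{R}^{n\times n}$ be the orthogonal matrix whose columns are the basis vectors $v_1,\ldots,v_n$. Then by Definition 1, $z = V\alpha$ where $\alpha = (\alpha_1,\ldots,\alpha_n)^T$, and since $V$ is orthogonal we have $|\det V| = 1$.

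The first step is to invoke the standard change-of-variables rule for differential entropy under a linear transformation: for an invertible matrix $A$, $H(Az) = H(z) + \log|\det A|$. Applied to $z = V\alpha$ with $|\det V|=1$, this yields $H(z) = H(\alpha)$, i.e., $H(z) = H(\alpha_1,\ldots,\alpha_n)$. The second step is to note that by Lemma 1, the coordinates $\alpha_1,\ldots,\alpha_n$ are mutually independent, so the joint entropy factors as $H(\alpha_1,\ldots,\alpha_n) = \sum_{k=1}^{n} H(\alpha_k)$. Chaining these two equalities gives the claim.

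As an alternative route that stays closer to the notation used in the preceding lemma, one can apply the chain rule to $H(z_1,\ldots,z_n)$, where each $z_k = \alpha_k v_k$ is supported on the one-dimensional subspace spanned by $v_k$. The chain rule gives $H(z_1,\ldots,z_n) = \sum_{k} H(z_k \mid z_1,\ldots,z_{k-1})$, and a straightforward extension of Lemma 2 (pairwise independence upgraded to mutual independence by the Gaussianity argument already used there) collapses each conditional entropy to $H(z_k) = H(\alpha_k)$. The equality $H(z) = H(z_1,\ldots,z_n)$ follows from the bijection $z \leftrightarrow (\alpha_1 v_1,\ldots,\alpha_n v_n)$ induced by the orthonormal basis.

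The only subtle point, and the one place I would be careful, is the handling of the differential entropy under the change of coordinates: one must treat $(z_1,\ldots,z_n)$ as the $n$-tuple of scalar coordinates $(\alpha_1,\ldots,\alpha_n)$ rather than as $n$ vectors in $\mathbb{R}^n$ (which would be a degenerate distribution on an $n$-dimensional subspace of $\mathbb{R}^{n^2}$ with ill-defined differential entropy). With that caveat, no real obstacle remains; the proof is essentially the two-line orthogonal-change-of-variables argument above.
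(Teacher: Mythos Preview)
Your primary argument is correct and is genuinely different from the paper's route. You go directly: $z=V\alpha$ with $V$ orthogonal, so $H(z)=H(\alpha)+\log|\det V|=H(\alpha)$, and then independence (Lemma~1) gives $H(\alpha)=\sum_k H(\alpha_k)$. The paper instead works with the joint quantity $H(z,z_1,\dots,z_n)$ and expands it two ways via the chain rule: once as $H(z)$ (since each $z_k$ is a deterministic function of $z$), and once as $\sum_k H(z_k)$ (using that $z=\sum_k z_k$ together with the independence statement of Lemma~2). It then argues $H(z_k)=H(\alpha_k)$ from the bijection $z_k\leftrightarrow\alpha_k$. Your alternative route is essentially the paper's argument.

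What your approach buys is cleanliness and rigor: the orthogonal change-of-variables formula for differential entropy is a standard one-line fact, and you never have to reason about $H(z_k)$ for a vector $z_k=\alpha_k v_k$ supported on a one-dimensional subspace of $\mathbb{R}^n$. You correctly flag that this is the delicate point; the paper's manipulations $H(z_k,\alpha_k)=H(z_k)=H(\alpha_k)$ are formally problematic for differential entropy (degenerate support) and only make sense after the identification you describe. What the paper's approach buys is that it stays phrased in terms of the decomposition $z=\sum_k z_k$ and the preceding Lemma~2, which connects more visibly to the later use in Theorem~1; but mathematically your route is the more direct and more careful one.
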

\begin{proof}
Since $z_k=\alpha_k v_k = (v_k^T z) v_k$, we have $H(z_k|z)=0$. Lemma~\ref{lem:indep} also shows $H(z_i|z_j)=H(z_i), \forall i\neq j$. Hence:
\begin{equation}
\nonumber
     H(z, z_1, \dots, z_n)=H(z)+\sum\limits_{k=1}^{n}H(z_k|z, z_1, \dots, z_{k-1})=H(z).
\end{equation}
The left hand side can be also written as:
\begin{align}
\nonumber H(z, z_1, \dots, z_n)&=H(z_1)+H(z|z_1)+\sum\limits_{k=2}^{n}H(z_k|z, z_1, \dots, z_{k-1})\\
\nonumber &=H(z_1)+H(z|z_1)\\
\nonumber &=H(z_1)+H(z_2, \dots, z_n)\\
\nonumber &=H(z_1)+\sum\limits_{k=2}^{n}H(z_k|z_2,\dots, z_{k-1})\\
\nonumber &=\sum\limits_{k=1}^{n}H(z_k).
\end{align}
Hence, $H(z)=\sum\limits_{k=1}^{n}H(z_k)$. We also have:
\begin{equation}
\nonumber 
\begin{array}{lll}
    H(z_k, \alpha_k) &= H(z_k)+H(\alpha_k|z_k) &= H(z_k)\\
    &=H(\alpha_k)+H(z_k|\alpha_k)&=H(\alpha_k).
\end{array}
\end{equation}
Therefore, $H(z_k)=H(\alpha_k)$, and $H(z)=\sum\limits_{k=1}^{n}H(\alpha_k)$.
\end{proof}

\begin{lemma}\label{lem:distortion}
Let $z=\sum_{k=1}^{n}\alpha_k v_k$ and $z'=\sum_{k=1}^{n}\alpha'_k v_k$. We have $\|W(z-z')\|=\sqrt{\sum_{k=1}^{m}(\alpha_k-\alpha'_k)^2s_k^2}$, where $s_k$ is the $k$-th singular value of $W$.
\end{lemma}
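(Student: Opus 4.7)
The plan is a direct calculation using the SVD factorization of $W$ together with the orthonormality of the singular vector bases. Since $z$ and $z'$ are both expanded in the orthonormal basis $\{v_k\}_{k=1}^{n}$ provided by Definition~\ref{def:W}, I would start by writing
\begin{equation*}
z - z' = \sum_{k=1}^{n} (\alpha_k - \alpha'_k)\, v_k,
\end{equation*}
so that by linearity $W(z-z') = \sum_{k=1}^{n}(\alpha_k - \alpha'_k)\, Wv_k$.

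Next I would evaluate $Wv_k$ using the SVD $W = USV^T$. Because the columns of $V$ are orthonormal, $V^T v_k = e_k$ (the $k$-th standard basis vector in $\mathbb{R}^n$). The matrix $S \in \mathbb{R}^{m\times n}$ has $s_k$ on its diagonal for $k=1,\dots,m$ and zeros elsewhere, so $S e_k = s_k e_k^{(m)}$ for $k \leq m$ (where $e_k^{(m)}$ is the $k$-th standard basis vector in $\mathbb{R}^m$), and $S e_k = 0$ for $k > m$. Consequently $Wv_k = s_k u_k$ when $k \leq m$ and $Wv_k = 0$ when $k > m$, where $u_k$ denotes the $k$-th column of $U$. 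This collapses the sum to
\begin{equation*}
W(z-z') = \sum_{k=1}^{m} (\alpha_k - \alpha'_k)\, s_k\, u_k.
\end{equation*}

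Finally, since $U$ is orthogonal its columns $\{u_k\}_{k=1}^{m}$ are orthonormal, so the squared $\ell_2$ norm of the above sum is simply the sum of the squared coefficients:
\begin{equation*}
\|W(z-z')\|^2 = \sum_{k=1}^{m} (\alpha_k - \alpha'_k)^2 s_k^2,
\end{equation*}
and taking square roots yields the claim. I do not anticipate a genuine obstacle here; the only subtlety is bookkeeping around the shape of $S$ when $n \neq m$, namely that the components $\alpha_k$ for $k > m$ lie in the null space of $W$ and therefore contribute nothing, which is precisely why the summation in the statement terminates at $m$ rather than $n$.
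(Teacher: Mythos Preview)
Your proof is correct and follows essentially the same route as the paper: expand $z-z'$ in the $\{v_k\}$ basis, push each $v_k$ through the SVD factors so that $Wv_k = s_k u_k$ for $k\le m$ and $0$ otherwise, and then invoke orthonormality of the columns of $U$ to read off the norm. If anything, your version is slightly cleaner in making explicit why the summation truncates at $m$.
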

\begin{proof}
We have:
\begin{align}
\nonumber \ W(z-z') = USV(z-z')&= \sum\limits_{k=1}^{n}USV((\alpha_k-\alpha_k')v_k)\\
\nonumber &=\sum\limits_{k=1}^{n}US((\alpha_k-\alpha_k')\delta_k) \\
\nonumber &=\sum\limits_{k=1}^{n}U(s_k(\alpha_k-\alpha_k')\delta_k)\\
\nonumber &=\sum\limits_{k=1}^{n}s_k(\alpha_k-\alpha_k')U_k,
\end{align}
where $\delta_k$ is a one-hot vector with its $k$-th element set to 1 and $U_k$ is the $k$-th column of $U$. Since $U_k$'s are orthonormal, we have $\|W(z-z')\|=\sqrt{\sum_{k=1}^{m}(\alpha_k-\alpha'_k)^2s_k^2}$.
\end{proof}

The following theorem provides the solution of the objective in (\ref{eq:entropy_linear}). 
\begin{theorem}\label{theorem:distortion}
Let $W=USV^T$ as defined in Definition~\ref{def:W}. Let $z=\sum_{k=1}^{n}\alpha_kv_k$ where $z\sim N(\mathbf{0}, \sigma^2\mathbf{I})$ and $\alpha_k$'s are sorted based on the singular values of $W$. The objective in (\ref{eq:entropy_linear}) is minimized by
$z'=\sum_{k=1}^m\alpha'_kv_k$ where
\begin{equation}
\alpha'_{k} = 
\begin{cases} 
\alpha_k &   k<m'\\ 
\alpha_k-\gamma \sign(\alpha_k) &  k=m'\\
0 &  k>m' 
\end{cases},
\end{equation}
where $m'=\underset{k}{\argmin}\ \epsilon_k\leq\epsilon$, $\epsilon_k=\sqrt{\sum_{i=k+1}^{m}\alpha_i^2s_i^2}$ and
$\gamma=\frac{\sqrt{\epsilon^2-\epsilon_{m'}^2}}{s_{m'}}$.
\end{theorem}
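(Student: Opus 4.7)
The plan is to leverage Lemmas~\ref{lem:entropy} and~\ref{lem:distortion} to reduce the joint optimization over $z'$ to a family of essentially scalar problems over the coefficients $\alpha'_k = v_k^T z'$. Writing $z' = \sum_{k=1}^n \alpha'_k v_k$, Lemma~\ref{lem:entropy} (applied to $z'$ whose coordinates can be made independent by choosing each $\alpha'_k$ as a function of the corresponding $\alpha_k$ only) gives $H(z') = \sum_{k=1}^n H(\alpha'_k)$, and Lemma~\ref{lem:distortion} rewrites the constraint as $\sum_{k=1}^m (\alpha_k - \alpha'_k)^2 s_k^2 \leq \epsilon^2$. Indices $k > m$ correspond to the null space of $W$ and do not appear in the constraint, so to minimize $H(\alpha'_k)$ I would set those coordinates to a deterministic constant (canonically $0$), which is exactly why the claimed $z'$ only sums up to index $m$.

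For the remaining $m$ coordinates the reduced problem is to minimize $\sum_{k=1}^m H(\alpha'_k)$ subject to $\sum_{k=1}^m (\alpha_k - \alpha'_k)^2 s_k^2 \leq \epsilon^2$. Each coordinate admits two extreme choices: keep it ($\alpha'_k = \alpha_k$, paying no distortion and retaining entropy $H(\alpha_k)$) or fully zero it ($\alpha'_k = 0$, paying distortion $\alpha_k^2 s_k^2$ but driving $H(\alpha'_k)$ to its floor). Since the $\alpha_k$ are identically distributed Gaussians (Lemma~\ref{lem:za}), every ``kill'' reduces the total entropy by the same amount, so the task collapses to zeroing as many coordinates as the budget allows. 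With the coordinates sorted so that the largest $s_k$ come first, the greedy policy zeros the tail whose cumulative cost $\epsilon_k^2 = \sum_{i=k+1}^m \alpha_i^2 s_i^2$ still fits in $\epsilon^2$, which pins down $m'$ as the smallest index with $\epsilon_{m'} \leq \epsilon$.

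To exploit the remaining budget $\epsilon^2 - \epsilon_{m'}^2$, I would spend it entirely on the boundary index $k = m'$ by enforcing $(\alpha_{m'} - \alpha'_{m'})^2 s_{m'}^2 = \epsilon^2 - \epsilon_{m'}^2$, choosing the sign-preserving shrinkage $\alpha'_{m'} = \alpha_{m'} - \gamma\,\sign(\alpha_{m'})$ with $\gamma = \sqrt{\epsilon^2 - \epsilon_{m'}^2}/s_{m'}$. Pulling $\alpha_{m'}$ toward $0$ is the entropy-reducing direction, because it brings the coordinate closer to the ``fully killed'' state, smoothly interpolating between ``kept'' and ``killed.'' A short exchange argument then rounds out the optimality: swapping a kill at a smaller $s_k$ for a kill at a larger $s_k$ can only increase the distortion without changing the entropy drop, so the sort-by-$s_k$ ordering cannot be improved.

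The step I expect to be the main obstacle is the rigorous justification that the boundary shrinkage is entropy-minimizing. Because the differential entropy of a deterministic function of $\alpha_{m'}$ is not a smooth function of the distortion budget (it collapses to its lower extreme only in the constant limit), the argument likely needs a quantization or limiting view in which the ``partially killed'' coordinate interpolates between the two extremes, with the sign-shrinkage form being the natural such interpolation. The remaining pieces (SVD decoupling, null-space handling, and greedy ordering) should then follow directly from the lemmas already established.
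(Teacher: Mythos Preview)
Your proposal is correct and follows essentially the same approach as the paper: decompose $H(z')$ via Lemma~\ref{lem:entropy}, decompose the distortion via Lemma~\ref{lem:distortion}, zero the null-space coordinates for free, then greedily kill coordinates with the smallest singular values and spend the residual budget on the boundary index via sign-preserving shrinkage. The paper's argument is in fact slightly less detailed than yours---it invokes monotonicity of Gaussian entropy in variance and the equal-entropy-drop-per-kill heuristic without the exchange argument you sketch---and it treats the boundary-shrinkage step with the same informality you flag as the main obstacle, so your concern there is not a gap relative to the paper but a shared soft spot.
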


\begin{proof}
Using Lemmas~\ref{lem:za} and~\ref{lem:entropy}, we have $H(z)=\sum_{k=1}^{n}H(\alpha_k)$, where $\alpha_k\sim N(0, \sigma^2)$. 
We have $H(\alpha_k)=\frac{1}{2}\ln(2\pi e \sigma^2)$ which is monotonically increasing in $var(\alpha_k)=\sigma^2$. Hence, $H(\alpha_k)$ can be reduced by suppressing the variance of $\alpha_k$, i.e., making $\alpha_k$'s closer to zero. 

Given a distortion budget, $\epsilon$, the question now is which $\alpha_k$ should be modified and by how much. From the entropy perspective, based on the assumptions above and Lemma~\ref{lem:entropy}, reducing the variance of each $\alpha_k$ reduces the entropy by the same amount for all $k$. Lemma~\ref{lem:distortion}, however, states that modifying $\alpha_k$ by $\gamma$ causes a distortion of $|\gamma|s_k$, where $s_k$ is the $k$-th singular value of $W$. Since smaller $s_k$'s cause smaller distortion, the solution is achieved by sorting the singular values and then modifying the $\alpha_k$'s corresponding to the smaller singular values towards zero one at a time until the budget $\epsilon$ is exhausted. 

The following provides the solution more specifically. If $m<n$ in the weight matrix, the last $n-m$ coefficients do not contribute to $W\cdot z$ and thus can be set to zero without causing any distortion. Now, assume the coefficients in range of $m'+1$ to $m$ are to be set to zero. The total distortion will be $\epsilon_{m'}=\sqrt{\sum_{i=m'+1}^{m}s_i^2\alpha_i^2}$. 
Also, the distortion caused by modifying $\alpha_{m'}$ by $\gamma$ is $s_{m'}|\gamma|$, which we will set to be equal to the remaining distortion, $\sqrt{\epsilon^2-\epsilon_{m'}^2}$, i.e., $\gamma=\frac{\sqrt{\epsilon^2-\epsilon_{m'}^2}}{s_{m'}}$. This completes the proof.
\end{proof}

\begin{definition}\label{def:signal_content} 
The signal content of $z$ with respect to a matrix $W$, or simply the signal content of $z$, is the solution to~(\ref{eq:entropy_linear}) with $\epsilon=0$. It is denoted by $z_{\mathcal{S}}$ and defined as follows:
\begin{equation}\label{eq:zS}
z_{\mathcal{S}} =  \sum\limits_{k=1}^{m}\alpha_kv_k.
\end{equation}
The remaining $n-m$ components of $z$ are called the null content defined as follows:
\begin{equation}\label{eq:zN}
z_{\mathcal{N}}=z-z_{\mathcal{S}}=\sum_{k=m+1}^{n}\alpha_kv_k.
\end{equation}
\end{definition}

The signal content is the information that is kept after multiplying $z$ by $W$, and the null content is the discarded information. By setting $z'=z_{\mathcal{S}}$, the client reduces the entropy without introducing any distortion to the output of $M_s$. We call this method \textit{distortion-free obfuscation} herein. 
The entropy can be further reduced by removing components from the signal content as well, for which the optimal way for a desired distortion $\epsilon$ is determined by Theorem~\ref{theorem:distortion}. We call this method \textit{distortion-bounded obfuscation} in the remainder of the paper.


\subsection{The Proposed Obfuscation Method}
In the following, we present our framework for unsupervised data obfuscation in the split inference setup. 

\noindent{\bf Training.} 
The server trains the model $M_s\circ M_c$ with inputs $\{x_i\}_{i=1}^N$ and target labels $\{y^\pub_i\}_{i=1}^{N}$, where, at each epoch, various fractions of the signal content of different layers are removed (one layer at a time), so that the model becomes robust to removing the components of the signal content. 
The model is also trained to generate feature vectors, $z$, with uncorrelated Gaussian activations as specified in Theorem~\ref{theorem:distortion}. 
To learn models with decorrelated activations, we used the penalty term proposed in~\citep{cogswell2015reducing} as $L_{deCov}=\frac{1}{2}(\|C\|_F^2-\|\diag(C)\|_2^2)$, where $C$ is the covariances between activation pairs, $\|\cdot\|_F$ is the Frobenius norm, and the $\diag(\cdot)$ operator extracts the main diagonal of a matrix into a vector. 
Additionally, the distribution of $z$ is forced to be close to Gaussian using the VAE approach~\citep{kingma2013auto}, i.e., by learning to generate $z$ from a variational distribution with a Gaussian prior.


\noindent{\bf Inference.} 
Upon providing the service, the server also provides a profile of the average reduction in target accuracy by removing a given fraction of the signal content for each split layer. 
The client first decides on the number of layers to be run locally on the edge device (determined based on the compute capacity) and then on the fraction of the signal content to maintain (determined based on the desired accuracy). 
For inference, the client computes the obfuscated feature vector $z'=g(M_c(x))$ and sends it to the server. The server then performs the rest of the computation and obtains $\hat{y}^\pub=M_s(z')$.  

Our framework provides a tradeoff between accuracy, obfuscation, and computational efficiency. 
Specifically, by running more layers locally (more edge computation), the client can achieve a better accuracy-obfuscation tradeoff, i.e., the same obfuscation can be obtained by discarding a smaller fraction of the signal content. 
Moreover, for a given split layer, the client can adjust the fraction of the signal content to be removed in order to obtain a desired tradeoff between accuracy and obfuscation. In Section~\ref{sec:experiment}, we provide empirical validations for the aforesaid tradeoffs.     

\noindent{\bf Computational and Communication cost.} Performing obfuscation requires the client to compute $m'$ coefficients $\{\alpha'_k\}_{k=1}^{m'}$ on the edge device, where the overhead of computing each $\alpha'_k$ is equivalent to $\frac{1}{m}$-th of total computation in the first layer of $M_s$. Therefore, the client performs an extra computation equivalent to $\frac{m'}{m}\times$ the first layer of $M_s$, where $m'\ll m$ in practice. Note that the client is not required to recover $z'=\sum_{k=1}^{m'}\alpha'_kv_k$, and can send only $\{\alpha'_k\}_{k=1}^{m'}$ to the server, who has the $\{v_k\}$ basis and can compute $z'$ accordingly. Therefore, our obfuscation method reduces the communication cost by a factor of $\frac{m'}{n}$ compared to the case that the raw feature vector $z$ is sent to the server.

\section{Experimental Results}\label{sec:experiment}


\subsection{Experiment Setup}

\noindent{\bf Model architecture and training settings.} We present the experimental results on an architecture used in prior work~\citep{song2019overlearning}, shown in Table~\ref{tab:architecture}. The  adversary model $M_a$ has the same architecture as the server model $M_s$. 
We train the models for $50$ epochs using Adam optimizer with an initial learning rate of $0.001$, which we drop by a factor of $10$ after $20$ and $40$ epochs.

\noindent{\bf Datasets.} We perform our experiments on four visual datasets described below. Table~\ref{tab:baseline-accs} lists the target and hidden attributes of the datasets used. 
\vspace{-0.3cm}
\begin{itemize}[itemsep=2pt,leftmargin=*]
    \item {\bf EMNIST}~\citep{cohen2017emnist} is an extended version of the MNIST dataset where the labels are augmented with writer IDs. 
    We select $13000$ samples from EMNIST written by $100$ writers with $130$ examples per writer. We then split this dataset into $10000$, $1500$, and $1500$ for training, validation, and testing. We use the digit and writer ID as the target and the hidden attributes, respectively. 
    
    \item {\bf FaceScrub}~\citep{facescrub2,facescrub} is a dataset of celebrity faces labeled with gender and identity. We use gender as the target and identity as the hidden attribute. In experiments, we crop images using the face region bounding boxes specified in the annotations and resize them to $50\times 50$.
    
    \item {\bf UTKFace}~\citep{utkface} is a dataset of face images labeled with gender and race, which we treat as the target and the hidden attributes, respectively. The face region bounding boxes are cropped and resized to $50\times 50$.
    
    \item {\bf CelebA}~\citep{celeba} is a dataset of celebrity images. Each image is labeled with 40 binary attributes, out of which, we select \emph{Smiling} as the target attribute and \{\emph{Male}, \emph{Heavy\_Makeup}, \emph{High\_Cheekbones}, \emph{Mouth\_Slightly\_Open}, \emph{Wearing\_Lipstick, Attractive}\} as hidden attributes. These attributes have near balanced distribution of positive and negative examples. 
    In experiments, we crop images to the face region and resize them to $73\times 60$.
\end{itemize}

\begin{table}[t]
\centering\small
\caption{\small Network Architecture. Each row shows a  split layer, i.e., 
for the split layer $i$, the input of that layer is sent to the server.}\label{tab:architecture}
\begin{tabular}{cl}
\toprule
{\textbf{Layer}}  & \textbf{Architecture} \\ 
\midrule
{\small \textbf{1}}  & {\small CONV($3,16$), ReLU, Maxpool($2\times 2$), Batchnorm} \\
{\small \textbf{2}}  & {\small CONV($3,32$), ReLU, Maxpool($2\times 2$), Batchnorm} \\
{\small \textbf{3}}  & {\small CONV($3,64$), ReLU, Maxpool($2\times 2$), Batchnorm} \\
{\small \textbf{4}}  & {\small FC(128), ReLU, Batchnorm} \\
{\small \textbf{5}}  & {\small FC(64), ReLU, Batchnorm} \\
{\small \textbf{6}}  & {\small FC($n_{\mbox{\tiny classes}}$), Softmax} \\
\bottomrule
\end{tabular}
\end{table}

\begin{table}[t]
\centering
\caption{\small Target and hidden attributes of the datasets used.}\label{tab:baseline-accs}
\begin{tabular}{ccccc}
\hline
\multirow{2}{*}{\textbf{Dataset}} & \multicolumn{2}{c}{\textbf{Target Attribute ($y^\pub$)}} & \multicolumn{2}{c}{\textbf{Hidden Attribute ($y^\pri$)}} \\ \cline{2-5} 
                                  & \textbf{Attribute}          & \textbf{No. Classes}   & \textbf{Attribute}       & \textbf{No. Classes}      \\ \hline
EMNIST                            & digit                       & 10                     & writer ID                & 100                       \\ \hline
UTKFace                           & gender                      & 2                      & race                     & 5                         \\ \hline
FaceScrub                         & gender                      & 2                      & identity                 & 530                       \\ \hline
\multirow{6}{*}{CelebA}           & \multirow{6}{*}{smiling}    & 2                      & gender                   & 2                         \\
                                  &                             & 2                      & makeup                   & 2                         \\
                                  &                             & 2                      & cheekbones               & 2                         \\
                                  &                             & 2                      & mouth-open               & 2                         \\
                                  &                             & 2                      & lipstick                 & 2                         \\
                                  &                             & 2                      & attractive               & 2                         \\ \hline
\end{tabular}
\vspace{-0.5cm}
\end{table}

\vspace{-0.3cm}
\noindent{\bf Measuring obfuscation.} 
Several methods have been proposed to measure the information leakage of intermediate feature vectors in neural networks. One approach is computing the mutual information between the query $x$ and the feature vector $z'$~\citep{kraskov2004estimating}. In practice, measuring the mutual information is not tractable for high-dimensional random variables, unless certain assumptions are made about the probability distribution of the random variables of interest. A more practical approach computes the reconstruction error, $\|\widetilde{x}-x\|$, where $\widetilde{x}$ is estimated using the feature vector~\citep{mahendran2015understanding}. Finally, attribute leakage can be defined based on the accuracy of an adversary model that predicts the hidden label from intermediate features. 

\begin{figure}[ht]
    \centering
    \includegraphics[width=0.5\linewidth]{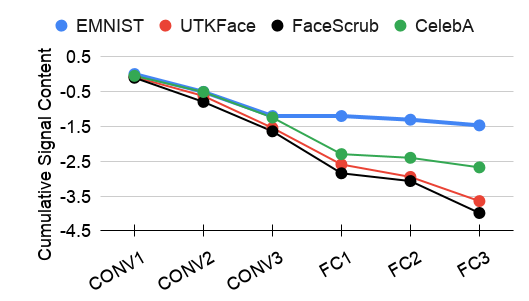}
    \caption{\small Cumulative preserved signal content at different layers of $M_s\circ M_c$. 
    The model acts as an obfuscator, gradually removing the content irrelevant to the target label from one layer to the next.  }
    \label{fig:baseline}
    \vspace{-0.5cm}
\end{figure}


In this paper, we follow the approach of predicting hidden attributes using an adversary model. 
Assume that each example $\{x_i\}_{i=1}^N$ has a target label $\{y^\pub_i\}_{i=1}^{N}$ and a hidden label $\{y^{\pri}_i\}_{i=1}^N$. 
The adversary trains the model $M_a$ with $(z'_i, y_i^\pri)$, where $z'_i=g(M_c(x_i))$ is the same feature vector that the server also receives to do inference of the target label.
Note that the training of $M_a$ is used as a post hoc process to evaluate the leakage of sensitive attributes and does not influence the client or server's processing. 
We refer to the accuracy of $M_s \circ g \circ M_c$ on $y^{\pub}$ as \emph{target accuracy} and the accuracy of $M_a \circ g \circ M_c$ on $y^{\pri}$ as \emph{attack accuracy}.



\begin{figure}[ht]
    \centering
    \includegraphics[width=\textwidth]{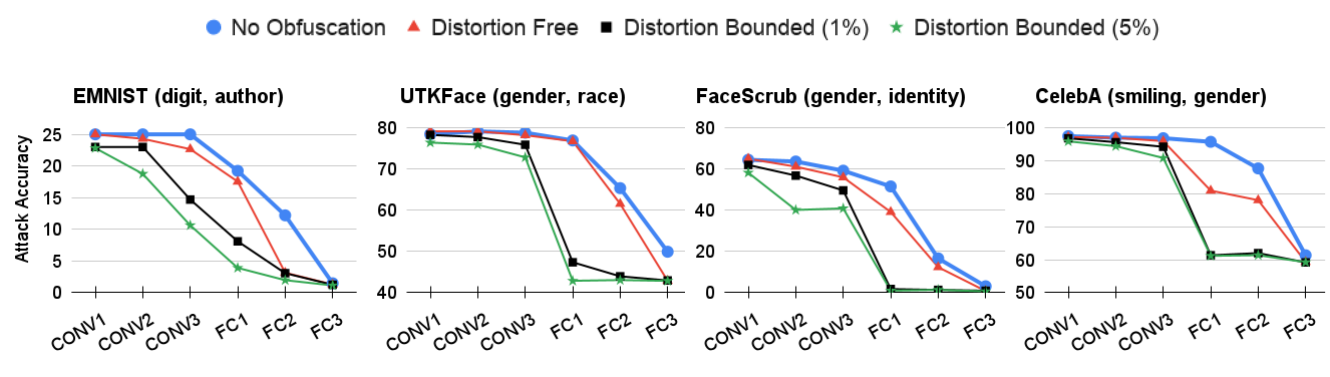}
    \vspace{-0.5cm}
    \caption{\small Attack accuracy versus the split layer for each dataset. The target/attack attributes are mentioned in the title of each sub-figure. Each colored curve corresponds to a different obfuscation method. The horizontal and vertical axes represent split layer and attack accuracy, respectively. It is desired to have a lower attack accuracy at earlier layers. The blue curves show the attack accuracy without obfuscation; the red curve represents our distortion-free obfuscation (with no loss of target accuracy); The black and green curves correspond to $1\%$ and $5\%$ of reduction in target accuracy, respectively. We observe two trends: (1)~by performing more computation on the edge device (e.g., splitting at FC1 layer instead of CONV1 layer), one can achieve a lower attack accuracy. (2)~for a certain split layer, our distortion free obfuscation reduces the attack accuracy of the baseline without reducing target accuracy. Our distortion-bounded accuracy provides a tradeoff between target accuracy and target accuracy. In this figure, for instance, we can see that the attack accuracy of the green curve ($5\%$ distortion) is lower than the black curve ($1\%$ distortion).  
    }\label{fig:private_acc_layers}
\end{figure}

\subsection{Evaluations}\label{sec:content}





\noindent{\bf Cumulative signal content.} 
We start our analysis by computing the norm of the null and signal contents in every layer of $M = M_s \circ M_c$. At each layer, the null content $z_\mathcal{N}$ is discarded and the signal content $z_\mathcal{S}$ is passed through the next layer. We compute the normalized amount of the information passed from the $i$-th layer to the next as $C_{\mathcal{S}}(z^{(i)})=\log(\frac{\|z_{\mathcal{S}}^{(i)}\|_2^2}{\|z^{(i)}\|_2^2})$, where $z^{(i)}$ and $z_{\mathcal{S}}^{(i)}$ are the activation vector and its signal content at the $i$-th layer, respectively.
Figure~\ref{fig:baseline} shows the cumulative amount of the signal content preserved up to the $i$-th layer, computed as $\sum_{j=1}^{i}C_{\mathcal{S}}(z^{(j)})$. 
The plot suggests that the model gradually removes the content irrelevant to the target label from one layer to the next, thus acting as an obfuscator.


\begin{figure*}[t]
    \centering
    \includegraphics[width=\textwidth]{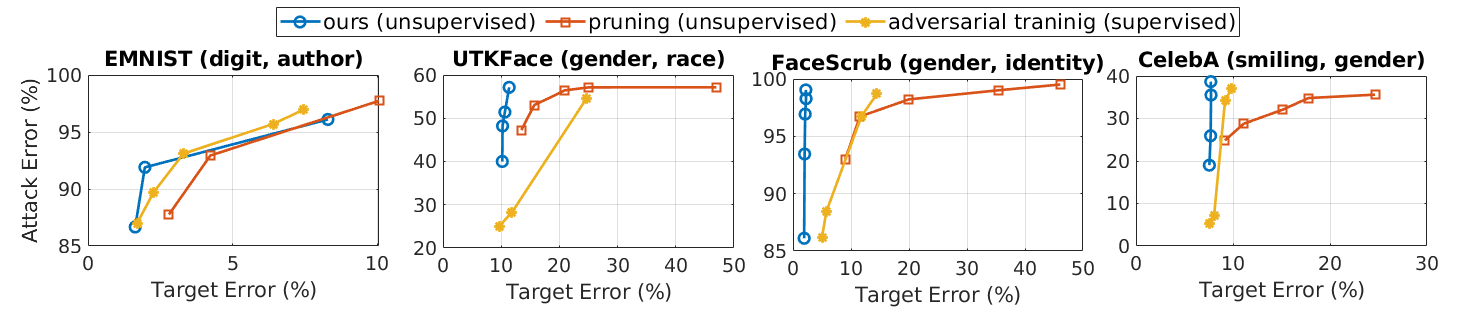}
    \caption{\small Comparison between our method, feature pruning, and adversarial training when splitting the model at the input of the FC1 layer. The target/attack attributes are mentioned in the title of each sub-figure. The horizontal and vertical axes show the classification \textbf{error} on $y^\pub$ and $y^\pri$, respectively. To extract the points of the blue curve, we obfuscate the features at the split layer with different distortion rates. For pruning, the curves are obtained by pruning the features at the split layer with different pruning ratios. For adversarial training, each point on the curve represents an adversarially trained network with different $\gamma$ parameter. Our goal is to have low error on $y^\pub$ and high error on $y^\pri$, hence, points to the upper left corner of the figures are desirable. In most cases, our method outperforms both adversarial training and pruning.}
    \label{fig:compare_both}
\end{figure*}



\noindent{\bf Investigating tradeoffs.} 
Figure~\ref{fig:private_acc_layers} shows the attack accuracy versus the split layer for four settings: (1)~without obfuscation, (2)~with distortion-free obfuscation (i.e., when $m'=m$), and (3),(4)~with distortion-bounded obfuscation such that the drop in target accuracy is at most $1\%$ and $5\%$, respectively. The results illustrate the tradeoffs between edge computation, obfuscation, and target accuracy, described in the following.
\begin{itemize}[itemsep=2pt,leftmargin=0.5cm]
    \item In all four cases, the attack accuracy decreases as the network is split at deeper layers. This observation indicates that more edge computation results in lower attack accuracy at the same target accuracy. 
    
    \item For each split layer, the attack accuracy of distortion-free obfuscation is less than that of the baseline (no obfuscation). This observation shows that even without decreasing the target accuracy, the feature vector can be modified to obtain a better obfuscation. 
    
    \item For each split layer, the distortion-bounded obfuscation further reduces the attack accuracy at the cost of a small reduction in target accuracy. In the example analysis of Figure~\ref{fig:private_acc_layers}, we include the attack accuracy when the target accuracy is dropped by $1\%$ and $5\%$. As seen, by losing more target accuracy, the attack accuracy can be further reduced. The tradeoff between attack and target accuracy is controlled by the the number of preserved features, $m'$.
    
\end{itemize}

\noindent{\bf Comparison to prior work.} We compare our obfuscation method to two general categories of prior work:

\begin{itemize}
    \item \textbf{Supervised Obfuscation.} Among existing supervised obfuscation methods introduced in Section~\ref{sec:related}, adversarial training generally provides the best tradeoff between accuracy and obfuscation. Therefore, we use adversarial training as a natural comparison baseline. We implemented the adversarial training framework proposed by~\citep{feutry2018learning} and trained the models in multiple settings with different $\gamma$ parameters (Eq.~\ref{eq:adversarial}) in range of $[0.1, 1]$ to achieve the best performance. 
    \item \textbf{Unsupervised Obfuscation.}  Similar to our approach, pruning network weights eliminates features that do not contribute to the classification done by $M_s$. Since pruning does not require access to the sensitive labels, it is a natural unsupervised obfuscation baseline. We adopt the pruning algorithm proposed by~\citep{li2016pruning} which works based on the $L_1$ norm of the columns of the following layer's weight matrix. After pruning, we fine-tune $M_s$ to improve its accuracy.
\end{itemize}

Figure~\ref{fig:compare_both} compares the tradeoffs achieved by our method with those achieved by (supervised) adversarial training and (unsupervised) pruning. In this example study, we split the network at the middle layer, i.e., the input of the $FC1$ layer. Since adversarial training specifically trains the model to obfuscate the sensitive attribute, it achieves a better tradeoff than the unsupervised pruning method. Although our method is unsupervised too, it outperforms the supervised adversarial training in most cases. The superior performance of our method is rooted in minimizing the distortion to the target accuracy while maximizing the obfuscated ``unrelated'' information.

\begin{figure}[t]
    \centering
    \includegraphics[width=0.6\columnwidth]{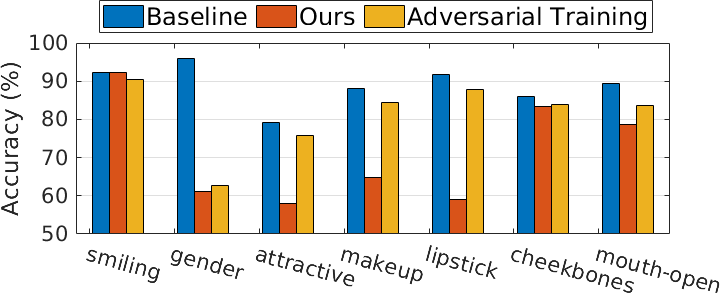}
    \caption{\small Accuracy on the target (\emph{Smiling}) and hidden attributes. 
    Our method obfuscates the feature vector without the knowledge of hidden attributes at training or inference times. Adversarial training (AT) method 
    maximizes the accuracy on \emph{Smiling}, while minimizing the accuracy on \emph{Gender}.
    As seen, AT successfully reduces accuracy on \emph{Gender} attribute but, unlike our method, fails to obfuscate information of other attributes. This highlights the applicability of our method in practical settings as a generic obfuscator compared to specialized techniques such as AT. }
    \label{fig:multi-private}
    \vspace{-0.5cm}
\end{figure}

\noindent{\bf Experiment with multiple private attributes.} 
To highlight the independence of our method from private attributes,
we also do the experiments with multiple (unseen) hidden labels. Specifically, we consider the CelebA model trained to detect \emph{Smiling} and evaluate two methods, (1) our method: we keep only $m'=1$ component from the signal content of feature vector and then train one separate adversary model per hidden attribute, and (2) adversarial training: we first adversarially train an $M_c$ model to obfuscate \emph{Gender}, and then train one separate adversary model to predict each hidden attribute.

For both of the above methods, the network is split at the input of the $FC1$ layer. As shown in Figure~\ref{fig:multi-private}, our method outperforms adversarial training in both the target and attack accuracy. Specifically, our method results in a significantly lower attack accuracy on all hidden attributes compared to the baseline attack accuracy. The only exceptions are \emph{High\_Cheekbones} and \emph{Mouth\_Open} attributes, which highly correlate with the target attribute (a smiling person is likely to have high cheekbones and open mouth). The correlation between target and hidden attributes causes the signal content of the server and adversary models to have large overlaps and, hence, results in high attack accuracy. 
Also, as seen, the adversarially trained model successfully hides the information that it has been trained to obfuscate (\emph{Gender}). The model, however, fails to remove information of other attributes such as \emph{Makeup} or \emph{Lipstick}. The results highlight the importance of the generic unsupervised obfuscation in scenarios where the sensitive attributes are not known. In such cases, unlike supervised obfuscation methods, our method successfully reduces the information leakage.

\section{Conclusion}
We proposed an obfuscation method for split edge-server inference of neural networks. 
We formulated the problem as an optimization problem based on minimizing the mutual information between the obfuscated and original features, under a distortion constraint on the model output. We derived an analytic solution for the class of linear operations on feature vectors. The obfuscation method is unsupervised with respect to sensitive attributes, i.e., it does not require the knowledge of sensitive attributes at training or inference phases. By measuring the information leakage using an adversary model, we empirically supported the effectiveness of our method when applied to models trained on various datasets. We also showed that our method outperforms existing techniques by achieving better tradeoffs between accuracy and obfuscation.




\bibliography{example_paper}

\begin{thebibliography}{37}
\providecommand{\natexlab}[1]{#1}
\providecommand{\url}[1]{\texttt{#1}}
\expandafter\ifx\csname urlstyle\endcsname\relax
  \providecommand{\doi}[1]{doi: #1}\else
  \providecommand{\doi}{doi: \begingroup \urlstyle{rm}\Url}\fi

\bibitem[Al-Riyami and Paterson(2003)]{al2003certificateless}
Sattam~S Al-Riyami and Kenneth~G Paterson.
\newblock Certificateless public key cryptography.
\newblock In \emph{International conference on the theory and application of
  cryptology and information security}, 2003.

\bibitem[Chi et~al.(2018)Chi, Owusu, Yin, Yu, Chan, Tague, and
  Tian]{chi2018privacy}
Jianfeng Chi, Emmanuel Owusu, Xuwang Yin, Tong Yu, William Chan, Patrick Tague,
  and Yuan Tian.
\newblock Privacy partitioning: Protecting user data during the deep learning
  inference phase.
\newblock \emph{arXiv preprint arXiv:1812.02863}, 2018.

\bibitem[Chokkattu(2019)]{Samsung}
Julian Chokkattu.
\newblock \emph{How to make face unlock more secure in the Samsung Galaxy S10
  line}, 2019.
\newblock
  \url{https://www.digitaltrends.com/mobile/how-to-make-face-unlock-secure-galaxy-s10-s10-plus-s10e/}.

\bibitem[Cogswell et~al.(2015)Cogswell, Ahmed, Girshick, Zitnick, and
  Batra]{cogswell2015reducing}
Michael Cogswell, Faruk Ahmed, Ross Girshick, Larry Zitnick, and Dhruv Batra.
\newblock Reducing overfitting in deep networks by decorrelating
  representations.
\newblock \emph{arXiv preprint arXiv:1511.06068}, 2015.

\bibitem[Cohen et~al.(2017)Cohen, Afshar, Tapson, and
  Van~Schaik]{cohen2017emnist}
Gregory Cohen, Saeed Afshar, Jonathan Tapson, and Andre Van~Schaik.
\newblock {EMNIST}: Extending {MNIST} to handwritten letters.
\newblock In \emph{International Joint Conference on Neural Networks}, 2017.

\bibitem[Dwork(2006)]{dwork2006differential}
Cynthia Dwork.
\newblock Differential privacy.
\newblock In \emph{33rd International Colloquium on Automata, Languages and
  Programming, part II (ICALP 2006)}, volume 4052 of \emph{Lecture Notes in
  Computer Science}, pages 1--12. Springer Verlag, July 2006.
\newblock ISBN 3-540-35907-9.
\newblock URL
  \url{https://www.microsoft.com/en-us/research/publication/differential-privacy/}.

\bibitem[Dwork et~al.(2006)Dwork, McSherry, Nissim, and
  Smith]{dwork2006calibrating}
Cynthia Dwork, Frank McSherry, Kobbi Nissim, and Adam Smith.
\newblock Calibrating noise to sensitivity in private data analysis.
\newblock In \emph{Theory of cryptography conference}, pages 265--284.
  Springer, 2006.

\bibitem[Dwork et~al.(2014)Dwork, Roth, et~al.]{dwork2014algorithmic}
Cynthia Dwork, Aaron Roth, et~al.
\newblock The algorithmic foundations of differential privacy.
\newblock \emph{Foundations and Trends in Theoretical Computer Science},
  9\penalty0 (3-4):\penalty0 211--407, 2014.

\bibitem[Edwards and Storkey(2016)]{edwards2015censoring}
Harrison Edwards and Amos Storkey.
\newblock Censoring representations with an adversary.
\newblock In \emph{International Conference on Learning Representations}, 2016.

\bibitem[{EU}(2016)]{gdpr}
{EU}.
\newblock European parliament and council of european union (2016) regulation
  ({EU}) 2016/679, 2016.
\newblock URL
  \url{https://eur-lex.europa.eu/legal-content/EN/TXT/HTML/?uri=CELEX:32016R0679&from=EN}.

\bibitem[FaceScrub(2020)]{facescrub}
FaceScrub.
\newblock \emph{The FaceScrub dataset}, 2020.
\newblock \url{http://engineering.purdue.edu/~mark/puthesis}, (accessed July 3,
  2020).

\bibitem[Feutry et~al.(2018)Feutry, Piantanida, Bengio, and
  Duhamel]{feutry2018learning}
Cl{\'e}ment Feutry, Pablo Piantanida, Yoshua Bengio, and Pierre Duhamel.
\newblock Learning anonymized representations with adversarial neural networks.
\newblock \emph{arXiv preprint arXiv:1802.09386}, 2018.

\bibitem[Hamm(2017)]{hamm2017minimax}
Jihun Hamm.
\newblock Minimax filter: learning to preserve privacy from inference attacks.
\newblock \emph{The Journal of Machine Learning Research}, 2017.

\bibitem[Huang et~al.(2017)Huang, Kairouz, Chen, Sankar, and
  Rajagopal]{huang2017context}
Chong Huang, Peter Kairouz, Xiao Chen, Lalitha Sankar, and Ram Rajagopal.
\newblock Context-aware generative adversarial privacy.
\newblock \emph{Entropy}, 19\penalty0 (12):\penalty0 656, 2017.

\bibitem[Juvekar et~al.(2018)Juvekar, Vaikuntanathan, and
  Chandrakasan]{juvekar2018gazelle}
Chiraag Juvekar, Vinod Vaikuntanathan, and Anantha Chandrakasan.
\newblock {GAZELLE}: A low latency framework for secure neural network
  inference.
\newblock In \emph{{USENIX} Security Symposium}, 2018.

\bibitem[Kairouz et~al.(2019)Kairouz, McMahan, Avent, Bellet, Bennis, Bhagoji,
  Bonawitz, Charles, Cormode, Cummings, et~al.]{kairouz2019advances}
Peter Kairouz, H~Brendan McMahan, Brendan Avent, Aur{\'e}lien Bellet, Mehdi
  Bennis, Arjun~Nitin Bhagoji, Keith Bonawitz, Zachary Charles, Graham Cormode,
  Rachel Cummings, et~al.
\newblock Advances and open problems in federated learning.
\newblock \emph{arXiv preprint arXiv:1912.04977}, 2019.

\bibitem[Kang et~al.(2017)Kang, Hauswald, Gao, Rovinski, Mudge, Mars, and
  Tang]{kang2017neurosurgeon}
Yiping Kang, Johann Hauswald, Cao Gao, Austin Rovinski, Trevor Mudge, Jason
  Mars, and Lingjia Tang.
\newblock Neurosurgeon: Collaborative intelligence between the cloud and mobile
  edge.
\newblock \emph{ACM SIGARCH Computer Architecture News}, 2017.

\bibitem[Kasiviswanathan et~al.(2011)Kasiviswanathan, Lee, Nissim,
  Raskhodnikova, and Smith]{kasiviswanathan2011can}
Shiva~Prasad Kasiviswanathan, Homin~K Lee, Kobbi Nissim, Sofya Raskhodnikova,
  and Adam Smith.
\newblock What can we learn privately?
\newblock \emph{SIAM Journal on Computing}, 40\penalty0 (3):\penalty0 793--826,
  2011.

\bibitem[Kingma and Welling(2013)]{kingma2013auto}
Diederik~P Kingma and Max Welling.
\newblock Auto-encoding variational bayes.
\newblock \emph{arXiv preprint arXiv:1312.6114}, 2013.

\bibitem[Kraskov et~al.(2004)Kraskov, St{\"o}gbauer, and
  Grassberger]{kraskov2004estimating}
Alexander Kraskov, Harald St{\"o}gbauer, and Peter Grassberger.
\newblock Estimating mutual information.
\newblock \emph{Physical review E}, 2004.

\bibitem[Li et~al.(2019)Li, Guo, Yang, and Chen]{li2019deepobfuscator}
Ang Li, Jiayi Guo, Huanrui Yang, and Yiran Chen.
\newblock Deepobfuscator: Adversarial training framework for privacy-preserving
  image classification.
\newblock In \emph{Advances in Neural Information Processing Systems
  Workshops}, 2019.

\bibitem[Li et~al.(2016)Li, Kadav, Durdanovic, Samet, and Graf]{li2016pruning}
Hao Li, Asim Kadav, Igor Durdanovic, Hanan Samet, and Hans~Peter Graf.
\newblock Pruning filters for efficient convnets.
\newblock In \emph{International Conference on Learning Representations}, 2016.

\bibitem[Li et~al.(2018)Li, Baldwin, and Cohn]{li-etal-2018-towards}
Yitong Li, Timothy Baldwin, and Trevor Cohn.
\newblock Towards robust and privacy-preserving text representations.
\newblock In \emph{Annual Meeting of the Association for Computational
  Linguistics}, 2018.

\bibitem[Liu et~al.(2019)Liu, Shrivastava, Du, and Zhong]{liu2019better}
Sicong Liu, Anshumali Shrivastava, Junzhao Du, and Lin Zhong.
\newblock Better accuracy with quantified privacy: representations learned via
  reconstructive adversarial network.
\newblock \emph{arXiv preprint arXiv:1901.08730}, 2019.

\bibitem[Liu et~al.(2015)Liu, Luo, Wang, and Tang]{celeba}
Ziwei Liu, Ping Luo, Xiaogang Wang, and Xiaoou Tang.
\newblock Deep learning face attributes in the wild.
\newblock In \emph{International Conference on Computer Vision}, 2015.

\bibitem[Mahendran and Vedaldi(2015)]{mahendran2015understanding}
Aravindh Mahendran and Andrea Vedaldi.
\newblock Understanding deep image representations by inverting them.
\newblock In \emph{Conference on Computer Vision and Pattern Recognition},
  2015.

\bibitem[Mireshghallah et~al.(2020)Mireshghallah, Taram, Ramrakhyani, Jalali,
  Tullsen, and Esmaeilzadeh]{mireshghallah2020shredder}
Fatemehsadat Mireshghallah, Mohammadkazem Taram, Prakash Ramrakhyani, Ali
  Jalali, Dean Tullsen, and Hadi Esmaeilzadeh.
\newblock Shredder: Learning noise distributions to protect inference privacy.
\newblock In \emph{International Conference on Architectural Support for
  Programming Languages and Operating Systems}, 2020.

\bibitem[Moyer et~al.(2018)Moyer, Gao, Brekelmans, Galstyan, and
  Ver~Steeg]{moyer2018invariant}
Daniel Moyer, Shuyang Gao, Rob Brekelmans, Aram Galstyan, and Greg Ver~Steeg.
\newblock Invariant representations without adversarial training.
\newblock In \emph{Advances in Neural Information Processing Systems}, 2018.

\bibitem[Ng and Winkler(2014)]{facescrub2}
Hong-Wei Ng and Stefan Winkler.
\newblock A data-driven approach to cleaning large face datasets.
\newblock In \emph{IEEE international conference on image processing}, 2014.

\bibitem[Osia et~al.(2018)Osia, Taheri, Shamsabadi, Katevas, Haddadi, and
  Rabiee]{osia2018deep}
Seyed~Ali Osia, Ali Taheri, Ali~Shahin Shamsabadi, Kleomenis Katevas, Hamed
  Haddadi, and Hamid~R Rabiee.
\newblock Deep private-feature extraction.
\newblock In \emph{IEEE Transactions on Knowledge and Data Engineering}, 2018.

\bibitem[Rathee et~al.(2020)Rathee, Rathee, Kumar, Chandran, Gupta, Rastogi,
  and Sharma]{rathee2020cryptflow2}
Deevashwer Rathee, Mayank Rathee, Nishant Kumar, Nishanth Chandran, Divya
  Gupta, Aseem Rastogi, and Rahul Sharma.
\newblock Cryptflow2: Practical 2-party secure inference.
\newblock In \emph{Proceedings of the 2020 ACM SIGSAC Conference on Computer
  and Communications Security}, pages 325--342, 2020.

\bibitem[Song and Shmatikov(2019)]{song2019overlearning}
Congzheng Song and Vitaly Shmatikov.
\newblock Overlearning reveals sensitive attributes.
\newblock In \emph{International Conference on Learning Representations}, 2019.

\bibitem[Ullman(2018)]{ullman2018tight}
Jonathan Ullman.
\newblock Tight lower bounds for locally differentially private selection.
\newblock \emph{arXiv preprint arXiv:1802.02638}, 2018.

\bibitem[Vishwamitra et~al.(2017)Vishwamitra, Knijnenburg, Hu, Kelly~Caine,
  et~al.]{vishwamitra2017blur}
Nishant Vishwamitra, Bart Knijnenburg, Hongxin Hu, Yifang~P Kelly~Caine, et~al.
\newblock Blur vs. block: Investigating the effectiveness of privacy-enhancing
  obfuscation for images.
\newblock In \emph{Conference on Computer Vision and Pattern Recognition
  Workshops}, 2017.

\bibitem[Xie et~al.(2017)Xie, Dai, Du, Hovy, and Neubig]{xie2017controllable}
Qizhe Xie, Zihang Dai, Yulun Du, Eduard Hovy, and Graham Neubig.
\newblock Controllable invariance through adversarial feature learning.
\newblock In \emph{Advances in Neural Information Processing Systems}, 2017.

\bibitem[Zhang et~al.(2017)Zhang, Song, and Qi]{utkface}
Zhifei Zhang, Yang Song, and Hairong Qi.
\newblock Age progression/regression by conditional adversarial autoencoder.
\newblock In \emph{Conference on Computer Vision and Pattern Recognition},
  2017.

\bibitem[Zhu et~al.(2020)Zhu, Iordanescu, and Karmanov]{microsoftAIHealth}
Xiaoyong Zhu, George Iordanescu, and Ilia Karmanov.
\newblock Using microsoft ai to build a lung-disease prediction model using
  chest x-ray images, 2020.
\newblock URL
  \url{https://docs.microsoft.com/en-us/archive/blogs/machinelearning/using-microsoft-ai-to-build-a-lung-disease-prediction-model-using-chest-x-ray-images}.

\end{thebibliography}
\bibliographystyle{plainnat}

\end{document}